\documentclass[letterpaper, 10 pt, conference]{ieeeconf}  
\IEEEoverridecommandlockouts                              

\usepackage{xcolor}
\usepackage{amsmath,amsfonts,amsthm,amssymb}  
\usepackage[font=footnotesize]{caption}
\usepackage[caption=false, font=footnotesize]{subfig}
\usepackage{algorithm}
\usepackage[noend]{algorithmic}
\usepackage{mathtools}
\usepackage{graphicx}         
\usepackage{svg}              

\usepackage[colorlinks=true, linkcolor=blue, citecolor=red, urlcolor=blue, bookmarks=true]{hyperref}
\usepackage{booktabs}

\usepackage[compress]{cite}
\definecolor{isparsblue}{HTML}{0072B2}

\let\labelindent\relax
\usepackage{enumitem}   
\usepackage{tikz}       
\usepackage{soul} 
\usepackage{wrapfig}
\usepackage{multirow}
\usepackage{multicol}

\allowdisplaybreaks

\usepackage{booktabs}
\usepackage{multirow}
\usepackage{adjustbox}

\usepackage{xspace}
\usepackage{cleveref}
\usepackage{bm}

\usepackage{etoolbox}
\newtheorem{theorem}{Theorem}
\newtheorem{claim}{Claim}

\theoremstyle{definition}

\theoremstyle{invariant}
\newtheorem{invariant}{Invariant}

\newcommand{\algname}[1]{\textsf{#1}\xspace}
\newcommand{\spars}{\algname{SPARS}}
\newcommand{\dense}{\algname{Dense}}
\newcommand{\iris}{\algname{IRIS}}
\newcommand{\irisc}{\algname{IRIS-C}}
\newcommand{\ispars}{\algname{Inspection-SPARS}}

\newtoggle{arxiv_version}
\toggletrue{arxiv_version}

\usepackage{float}

\newcommand{\ignore}[1]{}

\def\epsilon{\varepsilon}

\newif\ifshowcomments
\showcommentstrue          

\showcommentsfalse

\ifshowcomments
  \newcommand{\kiril}[1]{\textcolor{red}{(\textbf{Kiril:} #1)}}
  \newcommand{\adir}[1]{\textcolor{orange}{(\textbf{Adir:} #1)}}
  
  \newcommand{\todo}[1]{\textcolor{cyan}{(\textbf{TODO:} #1)}}
  \newcommand{\rewrite}[1]{\textcolor{olive}{(\textbf{Rewrite:} #1)}}
\else
  \newcommand{\kiril}[1]{\ignorespaces}
  \newcommand{\adir}[1]{\ignorespaces}
  \newcommand{\todo}[1]{\ignorespaces}
  \newcommand{\rewrite}[1]{\ignorespaces}
\fi

\def\niceparagraph#1{\vspace{5pt} \noindent \textbf{#1}}

\theoremstyle{definition}

\newif\ifincludeappendix
\includeappendixtrue  

\begin{document}

\title{Inspection-SPARS:  Task-Oriented Sparse Roadmaps for Inspection Planning}

\iftoggle{arxiv_version}{
    \author{
        Adir Morgan$^{*}$, Oren Salzman, and Kiril Solovey%
        \thanks{$^{*}$Corresponding author.}%
        \thanks{The authors are with the Technion--Israel Institute of
        Technology, Haifa, Israel.
        Emails: {\tt\small
        samorgan@campus.technion.ac.il,
        osalzman@cs.technion.ac.il,
        kirilsol@technion.ac.il}.}%
    }
}{
    \author{Anonymous Authors}
}

\maketitle

\begin{abstract}
Inspection planning seeks a minimum-length collision-free robot tour that
observes a given set of points of interest (POIs).
Sampling-based methods reduce this continuous problem to a \emph{graph
inspection planning} (GIP) problem over a discrete roadmap, which is then
solved using combinatorial solvers.
Dense roadmaps capture diverse inspection viewpoints and motion shortcuts, and
thus admit higher-quality solutions, but they induce large combinatorial search
spaces on which state-of-the-art GIP solvers struggle to find good solutions
within practical time budgets.
Roadmap sparsification---restructuring a dense roadmap into a compact
representation that preserves connectivity and path lengths---can alleviate
this burden.
However, existing sparsification approaches are either agnostic to the
underlying inspection task, or strive to ensure coverage of the POIs without
accounting for the quality of the resulting inspection plan.
We present \ispars, which is, to our knowledge, the first inspection-roadmap sparsifier with POI coverage and path-quality guarantees relative to the dense roadmap. 
To this end, we generalize the \spars framework, a popular task-agnostic sparsifier, from purely geometric criteria to task-oriented ones, introducing an inspection-aware vertex admission mechanism that treats POI coverage as a first-class sparsification criterion alongside connectivity and path quality. 
Experiments in realistic 3D environments show that \ispars reduces vertex and
edge counts by $\bm{4\text{--}8\times}$ while preserving coverage, allowing
the GIP solver to compute tours up to $\bm{25\%}$ shorter than with the dense roadmap or state-of-the-art inspection roadmap. 
More broadly, \ispars shows that sparsification can be made task-aware without sacrificing guarantees on solution quality.
\end{abstract}


\section{Introduction}
In \emph{inspection planning} (IP), a robot equipped with a sensor, such as a camera or LiDAR, is tasked with finding a minimum-length collision-free tour from which a given set of \emph{points of interest} (POIs) is observed.
Sampling-based approaches for IP discretize this continuous problem by constructing a motion-planning roadmap whose vertices represent robot configurations annotated with the POIs visible from them~\cite{fu2023asymptotically}.
Given such a discrete roadmap, the problem is reduced to the \emph{graph inspection planning} (GIP) problem: finding a minimum-length closed walk whose visited vertices collectively observe all required POIs.
Modern combinatorial GIP solvers search for such tours over the roadmap and can provide high-quality solutions together with optimality guarantees~\cite{morgan2026scalable,mizutani2024leveraging}.

Scaling GIP solvers to realistic environments requires high-resolution roadmaps that capture narrow geometric features and diverse POI viewpoints. A denser roadmap captures finer motion details, improving the best plan represented by the graph, but it also inflates the number of candidate plans a solver must sift through, hindering its ability to find good ones. This exposes a fundamental tension: the roadmap must be rich enough to represent high-quality inspection plans, yet compact enough to admit effective combinatorial search. Sparsification offers a natural resolution, retaining only the vertices and edges needed to represent such plans.

Sparsification has been widely explored in motion planning in the absence of inspection constraints. 
One approach introduced reachability-based admission rules for candidate PRM vertices that preserve free-space coverage and connectivity to reduce the roadmap size~\cite{nissoux1999visibility}. 
Graph-spanner methods~\cite{peleg1989graph} were subsequently applied to motion-planning roadmaps to reduce their edge count while approximately preserving shortest-path distances~\cite{marble2013asymptotically,wang2013fast}. 
More sophisticated sparsification methods, such as \spars~\cite{dobson2014sparse} and \algname{RSEC}~\cite{salzman2014sparsification}, reason about the structure of the continuous configuration space to remove geometrically redundant vertices while preserving connectivity and bounded path stretch.
%
Unfortunately, these methods base their decisions exclusively on geometric structure rather than inspection information, 
and could therefore eliminate vertices that provide critical POI vantage points, degrading plan quality.

Sparsification has also been considered in the context of GIP. 
Early randomized approaches sparsified possible viewpoints into a small set of inspection vertices, that are connected into a tour~\cite{danner2000randomized}. Englot and Hover~\cite{englot2012sampling} similarly separated coverage viewpoint selection from motion planning
optimization between viewpoints, while Urtasun et al.~\cite{urtasun2024sparse} constructed a sparse bipartite visibility graph for viewpoint selection and sequencing. 
%



More recently, coverage-informed sampling~\cite{fu2021computationally} was applied to the \iris inspection-planning framework~\cite{fu2023asymptotically}, using a randomized sparsification rule that myopically retains sampled configurations contributing new POI coverage. However, none of these approaches jointly consider motion geometry and inspection information, while providing an explicit guarantee on the quality of the resulting plans. 

This work therefore asks: \emph{Can an inspection roadmap be sparsified in a task-aware manner while preserving dense-roadmap coverage and providing an explicit guarantee on the quality of the inspection plans it represents?}

\begin{figure*}[th]
    \centering
    \subfloat[\texttt{Water-Tower}: dense roadmap]{        \includegraphics[width=.235\textwidth]{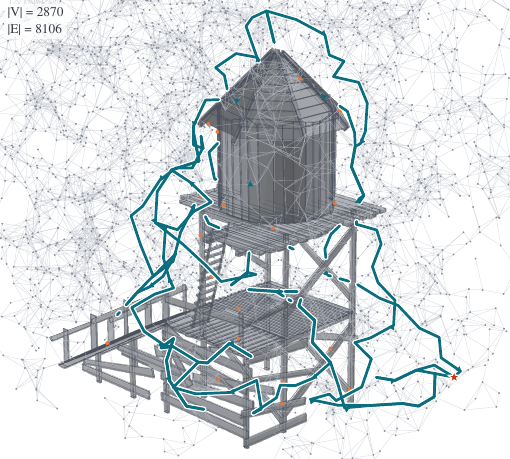}}
    \hfill%
    \subfloat[\texttt{Water-Tower}: \ispars roadmap]{       \includegraphics[width=.235\textwidth]{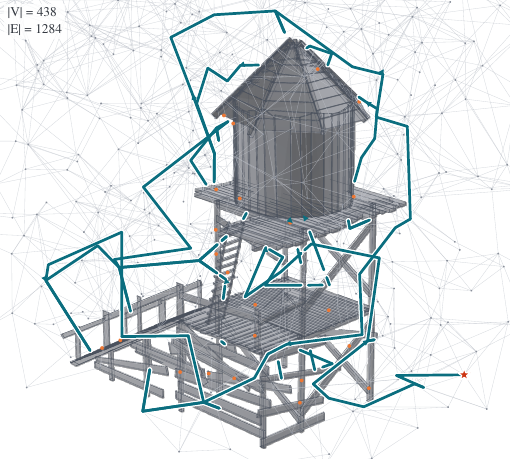}}%
    \hfill%
    \subfloat[\texttt{Bridge}: dense roadmap]{%
        \includegraphics[width=.235\textwidth]{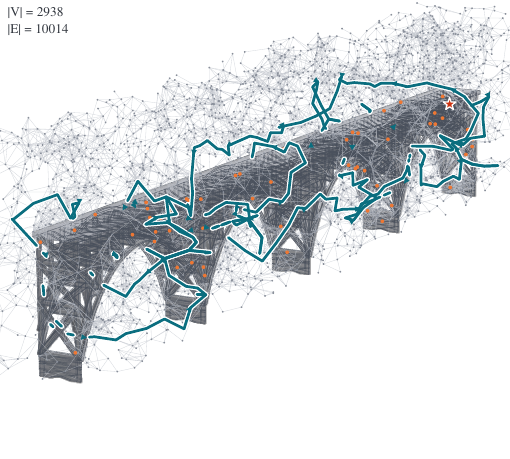}}%
    \hfill%
    \subfloat[\texttt{Bridge}: \ispars roadmap]{%
        \includegraphics[width=.235\textwidth]{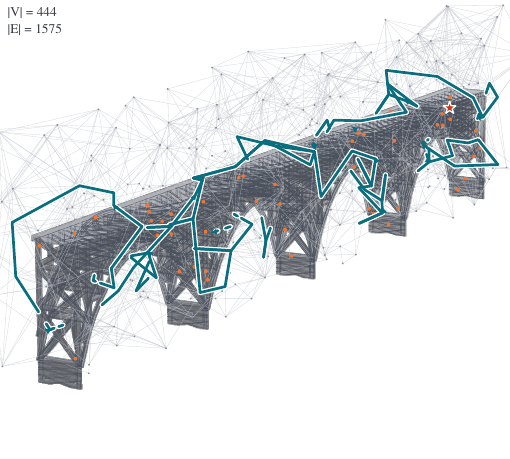}}%

    \par\vspace{0.4em}

    \includegraphics[width=.75\textwidth]{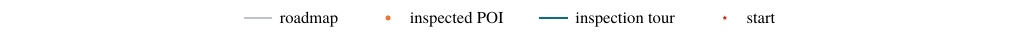}
    \caption{Inspection planning in 3D scenes, comparing dense roadmaps against sparse roadmaps generated by \ispars. Discarding redundant spatial detail while preserving geometric and POI-coverage information yields roadmaps on which good inspection plans are easier to find. 
    \kiril{also, report the final solution cost on top of each figure}
    } 
    \label{fig:roadmap-const-illustration}
\end{figure*}

\niceparagraph{Contribution.} We present \ispars, 
which is, to our knowledge, the first inspection-roadmap sparsifier with an explicit POI coverage preservation guarantee and tour-quality bound relative to a dense reference roadmap. Our method, described in Sec.~\ref{sec:method}, augments \spars with a local inspection-coverage admission mechanism that performs geometric roadmap sparsification while retaining local substitutes for inspection viewpoints.
The result is a substantially smaller plan search space, which still contains high-quality inspection plans.
Empirically, \ispars reduces roadmap vertex and edge counts by factors of $4-8\times$ across realistic inspection environments, as demonstrated in Sec.~\ref{sec:eval}.
Within the same solver budget, we show these reductions improve found tour lengths by up to $25\%$.

\section{Problem Formulation and Background}
\subsection{Inspection Planning}
An inspection planning instance is a tuple
$(\mathcal C, \mathcal C_{\mathrm{free}}, r, I,\chi)$, where $\mathcal C$ is the robot's configuration space,
$\mathcal C_{\mathrm{free}}\subseteq \mathcal C$ the collision-free subset, $r \in \mathcal C_{\mathrm{free}}$ the robot's departure configuration, $I$ is a discrete set of POIs, and $\chi:\mathcal C\rightarrow 2^I$ maps each configuration to the set of POIs it inspects.

Let $\mathcal L(q,q'):[0,1]\to \mathcal C$ denote the motion produced by a local planner (e.g., a straight-line path) between $q,q'\in \mathcal C$, when such a connection exists. 
%
%
%
%
An inspection tour is a sequence
\(\tau=(q_0,\dots,q_m)\), with \(q_0=q_m=r\), such that \(\forall i:q_i\in\mathcal C_{\mathrm{free}} \;\land \;\mathcal L(q_i,q_{i+1})
\subseteq \mathcal C_{\mathrm{free}}
\).
The tour length and coverage are defined, respectively, as
\[
c(\tau)
=
\sum_{i=0}^{m-1}
\left|\mathcal L(q_i,q_{i+1})\right|,
\qquad
\chi(\tau)
=
\bigcup_{i=0}^{m}\chi(q_i), 
\]
where $|\mathcal L(q_i,q_{i+1})|$ denotes the (arc) length of $L(q_i,q_{i+1})$.
The objective is to find a shortest collision-free tour that observes
every POI:
\[
\tau^\star
\in
\arg\min_{\tau}
\{
c(\tau)
\mid
\chi(\tau)=I
\}.
\]

Common approaches for inspection planning rely on sampling-based motion planning~\cite{kavraki1996probabilistic}, which approximates the robot's continuous configuration space using a weighted roadmap \(G=(V,E,c)\), where
\(V\subseteq \mathcal C_{\mathrm{free}}\) is a finite set of sampled configurations and \(E\subseteq V\times V\) represents collision-free local transitions (with respect to $\mathcal{L}$). 
Each edge \((u,v)\in E\) has weight
\(c(u,v)=\left|\mathcal L(u,v)\right|\),
and each vertex \(v\in V\) is associated with the observation set \(\chi(v)\). 
Restricting the inspection tours defined above to closed walks on \(G\) yields the \emph{graph inspection planning} (GIP) problem.

\subsection{The SPARS Framework}
In this work, we build upon the \spars framework~\cite{dobson2014sparse}, which constructs a compact motion-planning roadmap while preserving connectivity and bounded path quality. 
It simultaneously maintains a dense roadmap \(G_d=(V_d,E_d)\) and a sparse roadmap \(G_s=(V_s,E_s)\). Following a $\delta$-PRM roadmap algorithm, every sampled collision-free configuration is inserted into \(G_d\) and connected to existing vertices within radius \(\delta>0\) whenever the corresponding local path is collision-free. In contrast, a sample is admitted into \(G_s\) only when it serves a specific geometric purpose.

The key theoretical advantage of \spars is the \emph{transfer of guarantees} between the two roadmaps. The dense roadmap carries the burden of asymptotically approximating optimal paths in $\mathcal C_{\mathrm{free}}$, while the sparse roadmap is required only to preserve the geometrically significant paths revealed by the dense roadmap, up to a bounded length increase. As $G_d$ converges with the number of PRM samples toward optimal paths, this approximation guarantee transfers to $G_s$. We next briefly review the mechanisms underlying this construction. Full details are provided in~\cite{dobson2014sparse}. 

\niceparagraph{Visibility regions and representatives.}
%
Given a maximal local-plan range $\Delta > \delta$, define the set of sparse vertices reachable from a configuration $q$ as:
\[ W(q)= \{ g\in V_s \mid \ \mathcal L(q,g)\subset C_{\mathrm{free}} \land | \mathcal L(q,g)|\leq\Delta\}. \]
The representative of $q$ in $G_s$ is the nearest such vertex:
$\operatorname{rep}(q)=\text{argmin}_{g\in W(q)} |\mathcal L(q,g)|$. 

\niceparagraph{Admission criteria.}
Each sample \(q\) from the dense roadmap \(G_d\) is evaluated by four
criteria, each of which may augment the sparse roadmap \(G_s\)
(Fig.~\ref{fig:spars-illust}).
First, \(q\) is admitted as a \emph{Guard} if \(W(q)=\emptyset\).
That is, if no sparse vertex is \(\Delta\)-reachable from \(q\).
Second, \(q\) is admitted as a \emph{Bridge} if \(W(q)\) contains
vertices $q', q''$ belonging to distinct connected components of \(G_s\), in
which case edges \((q',q)\) and \((q,q'')\) are added to $G_s$ as well.
Otherwise, \(q\) may trigger the \emph{Interface} criterion if it is
locally connectable to two nearby sparse vertices \(q',q''\), that are
already connected in \(G_s\) but do not share an edge.
These local connections reveal an interface between their visibility regions.
The algorithm first attempts to add the direct edge \((q',q'')\). If
this connection is obstructed, it admits \(q\) and adds the edges
\((q',q)\) and \((q,q'')\).
The final (iv) \emph{Shortcut} criterion uses \(G_d\) to identify short paths between configurations supporting two interfaces of \(\operatorname{rep}(q)\).
It compares each such path with the corresponding path between interface midpoints in \(G_s\).
When the prescribed stretch factor \(t\) is violated, the criterion first attempts to add a direct edge between the relevant sparse vertices. Otherwise, it inserts suitable interface-supporting configurations and a smoothed path derived from \(G_d\).
This ensures $G_s$ paths approximate paths on $G_d$ by a $t$-factor inside any visibility region, providing \spars's path length approximation bound. 
The \spars pseudo-code is presented in the non-colored lines of Alg.~\ref{alg:inspection-spars}. 

\begin{figure}[t]
    \centering
    \begin{minipage}[c]{0.5\linewidth}
        \centering
        \includegraphics[width=\linewidth]{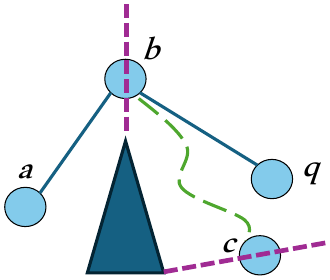}
    \end{minipage}
    \hspace{1em}
    \begin{minipage}[c]{0.37\linewidth}
        \vspace{1em}
        \caption{Illustration of the four SPARS admission criteria:
        \(a\) is added as a guard, $b$ may be added as a bridge or to support an interface connection, and \(c\) may be added to preserve a
        dense-roadmap shortcut whose sparse detour violates the
        prescribed stretch factor.}
        \label{fig:spars-illust}
    \end{minipage}%
    \vspace{-1em}
\end{figure}


    
\niceparagraph{Approximation guarantee.}
Let \(d_{\mathcal C}(q,q')\) denote the length of a shortest \emph{continuous} collision-free path between \(q,q'\in\mathcal C_{\mathrm{free}}\).
Let \(d_s(q,q')\) denote the length of the shortest route through \(G_s\), including the connections to sparse representatives.
Under standard sampling and positive-clearance assumptions made by \spars, the resulting sparse roadmap asymptotically satisfies the relation
\begin{equation}
    d_s(q,q')
    \leq
    t\cdot d_{\mathcal C}(q,q')+4\Delta
    \qquad
    \forall q,q'\in\mathcal C_{\mathrm{free}}.
    \label{eq:spars-bound}
\end{equation}

\section{Coverage-Preserving Sparse Roadmaps}
\label{sec:method}
In this section, we introduce our inspection-aware sparsification approach that preserves the quality of the overall inspection plan. In preparation, we first discuss the limitations of geometric sparsification (i.e., agnostic to the inspection task) and global POI-coverage preservation (which only considers whether a POI is covered without regard to the induced solution quality). 

\subsection{Limitations of Global Coverage Preservation}
For a roadmap \(G\), let
\(
I(G)=\bigcup_{v\in V(G)}\chi(v)
\)
denote its POI coverage. Next, we demonstrate that vanilla \spars does not ensure that the set of inspected POIs is preserved in the sparse roadmap, i.e., \(I(G_s)=I(G_d)\). 
As illustrated in Fig.~\ref{fig:counterex1}, suppose that a configuration \(q\) is admitted as a guard, and a subsequently sampled configuration \(q'\) observes a POI \(p\) unobserved by \(q\), but lies within the SPARS visibility region of \(q\).
If \(q'\) satisfies no other geometric admission criterion, it is retained in \(G_d\) but discarded from \(G_s\), thereby losing the observation of \(p\). Multiple occurrences of this construction can cause
\(G_s\) to omit an arbitrarily large fraction of the POIs represented in \(G_d\).

A natural repair is a \emph{global coverage rule} that admits a sample \(q\) whenever
\(
\chi(q)\setminus I(G_s)\neq\emptyset.
\)
Although this rule preserves \(I(G_s)=I(G_d)\) during construction, it may discard useful inspection-plan vertices that can lead to a lower solution cost.
Consider a POI \(p\) observable from two spatially separated configurations \(q\) and \(q'\), as illustrated in
Fig.~\ref{fig:counterex2}.
If \(q\) is admitted first, it globally represents \(p\), allowing the rule to reject \(q'\). Nevertheless, a dense-roadmap tour may use \(q'\) to cover $p$, while visiting \(q\) requires an arbitrarily long detour. Hence, despite having identical coverage, a sparse graph may reveal subpar inspection plans.

\begin{figure}
\vspace{0.5em}
\centering
\subfloat[\label{fig:counterex1}]{
\includegraphics[width=0.45\linewidth]{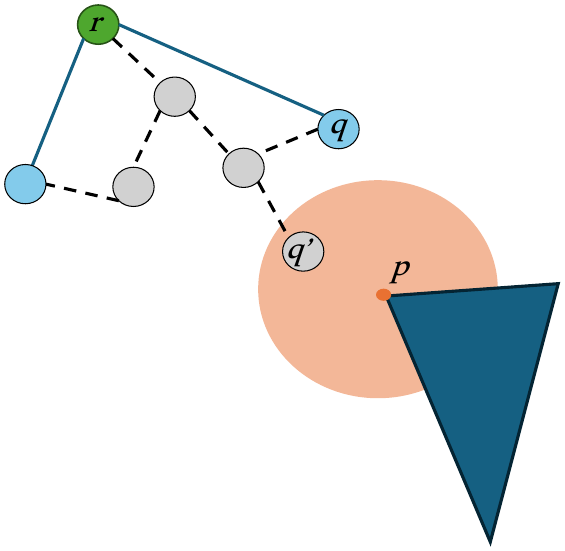}
}\hfill
\subfloat[\label{fig:counterex2}]{
\includegraphics[width=0.45\linewidth]{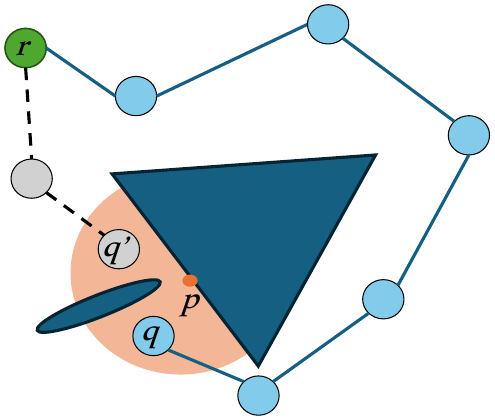}
}
\vspace{-0.5em}
\caption{Examples motivating local inspection-coverage preservation. POIs are in orange, shaded area is visibility range. Dense-graph edges are gray and sparse-graph edges are light blue. (a) Geometric sparsification without POI-coverage enforcement may retain $q$ and omit $q'$, losing inspection on $p$. (b) Global POI-coverage enforcement may choose $q$ over $q'$, leading to a lower-quality solution.}
\label{fig}
\vspace{-0.5em}
\end{figure}

\subsection{Local inspection-coverage invariant}
The preceding example shows that global coverage preserves inspection feasibility but provides no guarantee on inspection-plan quality. In particular, \(I(G_s)=I(G_d)\) ensures that every POI is observed somewhere in the sparse roadmap, yet it does not relate the dense-roadmap inspecting vertex with a spatially near sparse-roadmap inspecting vertex.
Consequently, a dense tour may inspect a POI \(p\) at a configuration \(q\) already lying along its route, while every sparse configuration observing \(p\) lies arbitrarily far from \(q\), forcing a potentially unbounded detour. 
To prevent this, we require every observation available at a dense-roadmap vertex to be represented locally in the sparse roadmap.
We introduce the following invariant to formalize this coverage requirement.

\begin{invariant}[Local inspection coverage]
\label{def:insp-invariant}
For every vertex $q\in V(G_d)$ and every POI $p\in\chi(q)$,
there exists a vertex $q'\in V(G_s)$ such that:
\[
    p\in\chi(q'),\qquad
    \mathcal L(q,q')\subset \mathcal C_{\mathrm{free}},\qquad
    |\mathcal L(q,q')|\leq\Delta .
\]
\end{invariant}

For a roadmap-sparsification algorithm maintaining this invariant, every observation available at a dense-roadmap vertex is represented by an observing sparse-roadmap vertex reachable through a valid local path of length at most $\Delta$ from $q$. We emphasize that different POIs $p\in \chi(q)$ may be covered by different vertices $q'\in G_s$. 


\subsection{Inspection-SPARS}
Following the invariant definition, we introduce \ispars ---roadmap sparsification for inspection planning (Alg.~\ref{alg:inspection-spars}).
\ispars augments \spars with a local inspection admission mechanism, presented in the \text{\color{isparsblue} colored lines}.
This mechanism compares the POI-coverage of the candidate $q$ with the POI-coverage in its reachable sparse-graph geometric neighborhood. The vertex $q$ is admitted whenever it observes any POI that is not observed by locally reachable sparse-graph vertices.
Details of all subroutines can be found in the \spars paper~\cite{dobson2014sparse}.

When a candidate \(q\) passes the inspection test, it is inserted into \(G_s\) using the standard \(\textsc{AddGuard}\) operation. This operation adds \(q\) as a sparse vertex but introduces no task-specific edges. Once admitted, \(q\) participates in all subsequent SPARS tests exactly like any other sparse vertex: it may serve as a reachable guard or representative, and edges incident to it are introduced only through the original bridge, interface, and shortcut mechanisms. 
Task information therefore affects only vertex retention, increasing the number of admitted vertices, while vanilla \spars machinery remains responsible for the roadmap's geometric structure. 
In Sec.~\ref{sec:eval-construction} we show this leads to \ispars having only a mild increase in roadmap size over \spars.

We place the inspection criterion after the first three \spars criteria and before the fourth, shortcut criterion. This design aims for computational efficiency, testing cheaper conditions earlier. 

\begin{algorithm}
\caption{\ispars roadmap construction.
\text{\color{isparsblue} Colored} lines constitute the inspection extension.}
\label{alg:inspection-spars}
\begin{algorithmic}[1]
\STATE Initialize $G_d, G_s$ with a single vertex $r$. 

\WHILE{$|G_d|\leq N$ \label{alg:line:while}}
    \STATE $q\gets \textsc{SampleConfiguration}(\mathcal C_{\text{free}})$
    \STATE $V_d.\text{insert}(q)$
    \FORALL{$x \in V_d \setminus \{q\}$ \textbf{s.t.} $|\mathcal{L}(x,q)| \leq \delta$}
        \IF{$\mathcal{L}(x,q) \subset \mathcal{C}_{\text{free}}$}
            \STATE $E_d.\textsc{insert}(x,q)$
        \ENDIF
    \ENDFOR
    \STATE $\mathcal W \gets \textsc{ReachableGuards}(q,\Delta, G_s)$
    \IF{$\mathcal W==\emptyset$}
        \STATE \textsc{AddGuard}$(q, G_s)$
        \ELSIF{$\exists w, w'\in \mathcal W$ disconnected in $G_s$}
        \STATE \textsc{AddBridge}$(q, \mathcal W, G_s)$
        \ELSE
        \STATE $\textsc{AddInterface}(q,\Delta,G_s)$
    \ENDIF

    {\color{isparsblue}
    \IF{$q\notin V(G_s)$}
        \STATE $I_\Delta(q,G_s)\gets\bigcup_{w\in\mathcal W}\chi(w)$
        \IF{$\chi(q)\setminus I_\Delta(q,G_s)\neq\emptyset$}
            \STATE \textsc{AddGuard}$(q, G_s)$
        \ENDIF
    \ENDIF
    }

    \IF{$q\notin V(G_s)$}
        \STATE\textsc{AddShortcut}$(q,t,G_d,G_s)$
    \ENDIF
\ENDWHILE


\RETURN $G_s,G_d$
\end{algorithmic}
\end{algorithm}

\section{Theoretical Analysis}
\label{sec:analysis}
We now turn to a formal analysis of the guarantees of the \ispars approach. 
As in the analysis for \spars~\cite{dobson2014sparse} and \iris~\cite{fu2023asymptotically}, we assume that the approximated continuous inspection plan is \emph{robust}, i.e., it can be approximated arbitrarily well using a sample set whose measure is strictly positive.  

\subsection{Properties of Inspection-SPARS} 
We first establish that \ispars preserves both the inspection information represented by the dense roadmap as formalized by Inv.~\ref{def:insp-invariant}, and the geometric properties inherited from \spars. 

\begin{claim}[Inspection invariant preservation] \label{clm:invariant-preservation} 
\ispars maintains the local inspection-coverage Invariant~\ref{def:insp-invariant} after every construction iteration (line~\ref{alg:line:while}). 
\end{claim}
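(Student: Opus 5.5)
We argue by induction on the number of iterations of the loop at line~\ref{alg:line:while}. The key structural fact is \emph{monotonicity}: no line of Alg.~\ref{alg:inspection-spars} ever removes a vertex from $G_s$. \textsc{AddGuard}, \textsc{AddBridge}, \textsc{AddInterface} and \textsc{AddShortcut} only insert vertices and edges. Moreover, the witness conditions in Invariant~\ref{def:insp-invariant} depend only on $q$, $q'$, $\chi$, $\mathcal C_{\mathrm{free}}$ and $\Delta$, not on the edges of either roadmap. Hence any witness $q'\in V(G_s)$ valid for a pair $(q,p)$ at some iteration stays valid at every later iteration. For the base case, initially $V(G_d)=V(G_s)=\{r\}$. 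For every $p\in\chi(r)$ we take $q'=r$ itself: $\mathcal L(r,r)$ is the constant path at $r\in\mathcal C_{\mathrm{free}}$, of length $0\le\Delta$.

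For the inductive step, assume the invariant holds before an iteration in which the sample $q$ is drawn. All pairs $(x,p)$ with $x\in V_d\setminus\{q\}$ still have their old witnesses, by monotonicity. It remains to handle the new vertex $q$ and each $p\in\chi(q)$, and we split into two cases according to the state after the \spars criteria and the colored block.

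\emph{Case 1: $q\in V(G_s)$ at the end of the iteration.} This covers admission as a guard, a bridge, an interface vertex, or via the colored \textsc{AddGuard}. Then $q$ is its own witness, exactly as in the base case.

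\emph{Case 2: $q$ was not admitted before the colored block.} Here the block is executed. If $\chi(q)\setminus I_\Delta(q,G_s)\neq\emptyset$, then $q$ is added and we are back in Case 1. Otherwise every $p\in\chi(q)$ lies in $\chi(w)$ for some $w\in\mathcal W$. Since $\mathcal W=\textsc{ReachableGuards}(q,\Delta,G_s)=W(q)$, such a $w$ satisfies $\mathcal L(q,w)\subset\mathcal C_{\mathrm{free}}$ and $|\mathcal L(q,w)|\le\Delta$ by definition. So $w$ is a valid witness for $(q,p)$. It remains one even if \textsc{AddShortcut} later adds vertices, again by monotonicity.

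The main subtlety is that $\mathcal W$ is computed \emph{before} the bridge/interface step, so $G_s$ may have changed by the time the colored test runs. We must check that each $w\in\mathcal W$ is still in $V(G_s)$ and still reachable. Both hold: vertices are never removed, and reachability is a purely geometric property of the pair $(q,w)$. The recomputation is therefore unnecessary, since using the stale $\mathcal W$ only makes the test more conservative. We should also confirm from~\cite{dobson2014sparse} that $\textsc{ReachableGuards}$ returns \emph{all} of $W(q)$ rather than a truncated nearest-neighbor subset. If it returns only a subset, the argument still goes through, because any returned $w$ still satisfies the reachability conditions. Finally, the invariant need not hold at intermediate points within an iteration. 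The claim concerns only the state at the loop boundary, so the induction is complete.
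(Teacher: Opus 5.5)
Your proof is correct and follows essentially the same route as the paper's: induction over samples, a case split on whether $q$ is admitted to $G_s$, witnesses drawn from $\mathcal W$ when the inspection test fails, and the fact that vertices are never removed from $G_s$, so earlier witnesses stay valid. Your explicit base case and your check that the $\mathcal W$ computed before the bridge/interface step still yields valid witnesses are details the paper leaves implicit.
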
 

\begin{proof}
We prove by induction over the sampled configurations. 
Suppose the invariant holds before processing \(q\). 
If \(q\) is admitted into
\(G_s\), then \(q\) itself represents every \(p\in\chi(q)\). 
Otherwise, $q$ has not passed the \text{\color{isparsblue}inspection admission test}, which implies that 
\(
    \chi(q)\subseteq I_\Delta(q,G_s)
\).
Hence, for every \(p\in\chi(q)\), some
\(g\in \mathcal W\) observes \(p\). 
By definition, this \(g\) is connected to \(q\) by a collision-free local path of length at most \(\Delta\). 
Thereafter, any such representative remains valid, as no sparse vertices are removed during construction. 
The claim follows by induction.
\end{proof}


Claim~\ref{clm:invariant-preservation} concerns the coverage of the complete roadmaps. At finite \(N\), however, the root-connected components of the roadmaps \(G_d[r]\) and \(G_s[r]\) may cover different POI sets because the connectivity guarantees inherited from \spars are asymptotic. This distinction motivates restricting the subsequent comparison to POIs reachable from the root in both roadmaps. 


Next, we address the preservation of \emph{geometric} properties by \ispars, showing the added mechanism does not asymptotically break the guarantees provided by \spars.

\begin{claim}[Preservation of \spars geometric guarantees] 
\label{lem:spars-preservation}
\ispars preserves the \spars configuration-space representation, connectivity, and path-quality guarantees.
\end{claim}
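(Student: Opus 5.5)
The plan is to show that the inspection admission test (the colored lines of Alg.~\ref{alg:inspection-spars}) only ever \emph{adds} guards to $G_s$. We then check that each \spars guarantee depends on the sparse roadmap in a way that survives such additions. Concretely, I would argue that every invariant \spars maintains after processing a sample still holds in \ispars after that same sample. The two facts needed are: (i) the four original criteria are applied verbatim, with the inspection test inserted between the interface and shortcut tests; (ii) \textsc{AddGuard} inserts an isolated vertex and removes nothing.

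\textbf{Configuration-space representation.} In \spars this property says that every $q\in\mathcal C_{\mathrm{free}}$ eventually has $W(q)\neq\emptyset$. Its termination argument is that guard admissions stop once the guards' $\Delta$-visibility regions cover $\mathcal C_{\mathrm{free}}$. Extra guards only enlarge $\bigcup_{g\in V_s}$ of these visibility regions, so $W(q)$ can only grow and coverage is preserved.

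I must also check that the number of inspection-admitted vertices is finite, so that the usual ``sparse roadmap stops growing'' argument still applies. Since $I$ is finite, I would bound the admissions as follows. A sample $q$ is admitted only if some $p\in\chi(q)$ is not observed within its reachable neighborhood. Cover $\mathcal C_{\mathrm{free}}$ by finitely many sets of diameter less than $\Delta$ that are locally visible from one another (possible under the positive-clearance assumption). Each pair (cell, POI) then triggers at most one admission, because once a vertex in a cell observes $p$, every later sample of that cell sees it within $\Delta$. Hence inspection admissions are at most $|I|$ times the number of cells.

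\textbf{Connectivity.} Connectivity is established by the bridge and interface criteria. These are evaluated on exactly the same $\mathcal W$ and the same rules. The new vertex is initially isolated, but it is itself a guard with a nonempty $\Delta$-neighborhood. Later samples near it therefore trigger bridges or interfaces exactly as for any \spars guard, so the \spars argument (samples eventually land in every interface region with positive probability) applies unchanged to the enlarged guard set.

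\textbf{Path quality.} The bound \eqref{eq:spars-bound} follows from the shortcut criterion enforcing $t$-stretch between interface supports of every sparse vertex. Inspection guards are subject to the same shortcut test as representatives. The \spars proof of \eqref{eq:spars-bound} is a per-visibility-region argument that never assumes which guards exist, so it carries over directly.

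\textbf{Main obstacle.} I expect the hard step to be the finiteness/termination argument for inspection admissions. The clean cell bound needs $\chi$ to be reasonably regular. If that fails, I would instead state the guarantee asymptotically: with probability one, for each (cell, POI) pair with positive-measure observing set, the count of admissions is finite. The remaining \spars guarantees then hold in the limit, which is all the claim asserts.
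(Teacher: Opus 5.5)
\paragraph{Gap: the skipped shortcut test.} The gap is in the path-quality step. Your organizing premise is that the original criteria are applied verbatim, so whatever \spars guarantees after a sample also holds for \ispars after that sample. This fails on exactly the samples admitted by the inspection test. For such a $q$, \textsc{AddShortcut}$(q,t,G_d,G_s)$ is never executed, and \textsc{AddGuard} inserts $q$ with no edges. Nothing replaces the shortcut path that \spars would have added at that moment. Your remark that inspection guards are ``subject to the same shortcut test'' concerns later samples represented by $q$, not this skipped check. Path quality also does not follow by monotonicity the way representation does. A new guard repartitions the visibility regions and creates interfaces and interface pairs whose stretch has not yet been verified. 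The \spars per-region argument therefore applies only once the sparse vertex set stops changing and every subsequent sample actually runs the shortcut test.

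\paragraph{How to close it.} The missing step is the middle step of the paper's proof, and your own finiteness bound supplies it. Only finitely many shortcut checks are skipped. Each stretch violation is witnessed by a positive-measure set of samples, so almost surely some sample processed after the last inspection firing handles it. From that point on, \ispars coincides with \spars run from a finite augmented sparse roadmap, to which the asymptotic \spars analysis applies.

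\paragraph{The rest of your proposal.} Your finiteness argument is the covering dual of the paper's packing argument; the paper instead notes that vertices admitted because of the same POI are pairwise not $\Delta$-reachable. Both arguments are purely geometric and rest on the same positive-clearance caveat for samples near obstacles. The regularity of $\chi$ you worry about is therefore irrelevant, and the probabilistic fallback is unnecessary. The representation and connectivity parts are fine: the inspection test follows the guard, bridge and interface tests, so it can never preempt them, and nothing is ever removed from $G_s$.
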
 

\begin{proof}[Proof sketch]
The claim follows from the asymptotic analysis of \spars~\cite{dobson2014sparse},
combined with the fact that the inspection test fires only finitely many times.

Consider a sample $q$ admitted by the inspection test, triggered by some POI $p$.
Once $q\in V(G_s)$, no later sample that can reach $q$ by a collision-free local
path of length at most $\Delta$ is admitted by the inspection test due to the same $p$.
Hence all vertices admitted due to $p$ are pairwise not $\Delta$-reachable, and their number is finite under the positive-clearance assumptions of \spars~\cite{dobson2014sparse}. As $I$ is
also finite, the inspection test fires only finitely many times.

Before its last firing, the inspection test can preempt only the shortcut test,
as the guard, bridge, and interface tests precede it. 
However, each skipped shortcut event is witnessed by a nonzero-measure set of samples, and is therefore asymptotically guaranteed to be processed by a later sample. 

After the last firing, \ispars executes exactly the \spars procedure on a roadmap augmented by finitely many ordinary sparse vertices. These vertices may refine the
visibility-region decomposition and update representatives, as guard
admissions do in \spars, but vertices and edges are never removed, so free-space coverage and connectivity are retained. The geometric guarantees
of \spars therefore follow.
\end{proof}



\subsection{Inspection-Planning Length Approximation}
\label{sec}
Having established local preservation of dense-roadmap observations (Claim~\ref{clm:invariant-preservation}) and retention of the \spars path-quality properties (Claim~\ref{lem:spars-preservation}), we apply them to compare inspection tours represented by the two roadmaps.

Let
\(
I_d= I(G_d[r]),
I_s=I(G_s[r])
\)
be the reachable POI sets by the corresponding roadmaps,
and fix a common target set of POIs-to-inspect \(I_t= I_d\cap I_s\), with \(K=|I_t|\). Let \(\tau_d^\star\) and \(\tau_s^\star\) denote shortest tours inspecting \(I_t\) on \(G_d\) and \(G_s\), respectively.

\begin{theorem}[Inspection-tour approximation] \label{thm:inspection-approximation} As the number of samples \(N\) tends to infinity, with probability approaching one, \[ c(\tau_s^*) \leq t\,c(\tau_d^*) +2t\cdot \Delta \cdot K. \] 
\end{theorem}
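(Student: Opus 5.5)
We start from an optimal dense tour $\tau_d^\star=(q_0,\dots,q_m)$ on $G_d$ with $q_0=q_m=r$ and $\chi(\tau_d^\star)\supseteq I_t$. For each POI $p\in I_t$ we choose one index $i(p)$ with $p\in\chi(q_{i(p)})$. Let $Q=\{q_{i(p)}: p\in I_t\}$ be these witness vertices; there are at most $K$ of them, counted with multiplicity over POIs. We then build a sparse tour $\tau_s$ that follows the dense tour geometrically and performs short detours to sparse observers. The claimed bound then follows from $c(\tau_s^\star)\le c(\tau_s)$.

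\textbf{Step 1 (local observers).} By Claim~\ref{clm:invariant-preservation}, for each $p\in I_t$ there is a sparse vertex $g_p$ with $p\in\chi(g_p)$, $\mathcal L(q_{i(p)},g_p)\subset\mathcal C_{\mathrm{free}}$, and $|\mathcal L(q_{i(p)},g_p)|\le\Delta$. Visiting $g_p$ therefore inspects $p$.

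\textbf{Step 2 (geometric transfer).} Split $\tau_d^\star$ at the witness indices into consecutive dense segments $\sigma_0,\dots,\sigma_{K}$. Their endpoints are $r$, the witnesses in tour order, and $r$ again. The sparse tour visits $r, g_{p_1}, g_{p_2},\dots, g_{p_K}, r$ in this order. Its length is bounded by
\[
d_s(g_{p_j},g_{p_{j+1}})\le t\bigl(\Delta+c(\sigma_j)+\Delta\bigr),
\]
using the triangle inequality on the continuous path $g_{p_j}\to q_{i(p_j)}\to\cdots\to q_{i(p_{j+1})}\to g_{p_{j+1}}$. This assumes the \spars stretch guarantee (retained by Claim~\ref{lem:spars-preservation}) can be applied between \emph{sparse} vertices themselves, with purely multiplicative stretch $t$ relative to a dense/continuous path. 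The additive $4\Delta$ term of \eqref{eq:spars-bound} comes from connecting arbitrary configurations to representatives, and it disappears when the endpoints are sparse vertices. Each observer $g_p$ is incident to two segments, which contributes $2\Delta$ per POI before stretching. Summing over all segments gives $t\,c(\tau_d^\star)+2t\Delta K$. If several POIs share a witness, their observers are visited consecutively via the same detour structure, which only lowers the count. Taking the sparse-graph version as $\tau_s^\star$'s competitor yields the theorem.

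\textbf{Step 3 (asymptotics and reachability).} The statement holds ``with probability approaching one'' because the stretch guarantee of \spars and the robustness assumption hold only asymptotically. We invoke Claim~\ref{lem:spars-preservation} as $N\to\infty$. We must also check that each $g_p$ lies in $G_s[r]$. Asymptotic connectivity inherited from \spars makes $G_s$ connected on the relevant free-space component, and $g_p$ is locally connected to $q_{i(p)}\in G_d[r]$, so this holds.

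\textbf{Main obstacle.} The delicate step is Step 2: justifying a purely multiplicative bound $d_s(g,g')\le t\cdot(\text{dense path length})$ between sparse vertices without the additive $4\Delta$. I would first argue this directly from the \spars shortcut/interface invariant, which guarantees $t$-stretch of dense paths passing through consecutive visibility regions. The sparse endpoints are their own representatives, so no representative-connection slack appears. If this fails, the fallback is to absorb the slack into the $\Delta$ detours, possibly at the cost of a worse constant.
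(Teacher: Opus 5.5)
Your proposal is correct and follows essentially the same route as the paper. You split $\tau_d^\star$ at witnessing vertices ordered along the tour (the paper fixes the first inspecting vertex, which changes nothing), replace each witness by a $\Delta$-reachable sparse observer from Invariant~\ref{def:insp-invariant}, bound each leg by $d_{\mathcal C}\le \ell_i+2\Delta$ with one $\Delta$ dropping on the two legs at $r$, and compare the resulting walk against $\tau_s^\star$. The step you flag as the main obstacle is the purely multiplicative stretch for sparse endpoints, justified because each is its own representative; this is exactly the refinement the paper invokes, and the paper likewise notes in a footnote that the general bound~\eqref{eq:spars-bound} would still yield the result with a looser additive term.
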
 

\begin{proof} 
Let $\tau_d^*=(q_0,q_1,\ldots,q_m)$ be an optimal inspection plan on the dense graph $G_d$, with $q_0=q_m=r$. Applying the \spars path bound~\eqref{eq:spars-bound} separately to
every edge or vertex transition of \(\tau_d^*\) could accumulate to an excessively large additive term proportional to the number of transitions.

We therefore partition the tour according to \emph{coverage events}.
For every \(p\in I_t\), let \(x_d(p)\) be the first vertex along \(\tau_d^*\) that inspects \(p\).
Order the POIs as \(p_1,\dots,p_K\) according to the occurrence of \(x_d(p_i)\) along
the tour, breaking ties arbitrarily, and define $\forall i: x_d^i=x_d(p_i)$. We set $x_d^0=x_d^{K+1}=r$. These vertices partition \(\tau_d^*\) into \(K+1\) consecutive walk segments. Let \(\tau_d^i\) denote the segment from \(x_d^i\) to \(x_d^{i+1}\), and let $\ell_i=c(\tau_d^i)$.
Note that a segment may be empty when multiple POIs are first inspected by the same vertex. 
As the segments form a partition of \(\tau_d^*\), we have that 
\(\sum_{i=0}^{K}\ell_i=c(\tau_d^*)\).

By the local inspection-coverage Invariant~\ref{def:insp-invariant}, for every \(i\in[1,K]\) there exists a sparse vertex
\(x_s^i\in V(G_s)\) that observes \(p_i\) and is connected to \(x_d^i\) by a collision-free local path of length at most \(\Delta\). 
On the asymptotic connectivity event established above, every such \(x_s^i\) belongs to
\(G_s[r]\).

We construct the following walk on the sparse graph: \(\widetilde{\tau}_s\) is a concatenation of shortest paths in \(G_s\) between each consecutive pair \(x_s^i,x_s^{i+1}\). 
As before, we set \( x_s^0=x_s^{K+1}=r\). 
Since \(x_s^i\) inspects \(p_i\), \(\widetilde{\tau}_s\) is a valid inspection plan for \(I_t\).

Consider one such segment \(x_s^i,x_s^{i+1}\). 
A collision-free \emph{continuous} path between those endpoints can be constructed by concatenating three motion segments: The local planner path \(\mathcal L(x_s^i,x_d^i)\), the dense-graph subwalk \(\tau_d^i\), and the local planner path \(\mathcal L(x_d^{i+1},x_s^{i+1})\).
Consequently,
\(
    d_{\mathcal C}(x_s^i,x_s^{i+1})
    \leq
    \ell_i+2\Delta
\).


Next, we upper-bound the distance between $x_s^i,x_s^{i+1}$ over~$G_s$. 
For endpoints already in the sparse roadmap, we refine the \spars path-decomposition argument to obtain a purely multiplicative bound.
For general $q, q'$, the additive term $2\Delta$ (per endpoint) has two sources.
The first $\Delta$ accounts for the local path from $q$ to its representative.
This vanishes here, since $x_s^i,x_s^{i+1}$ are their own representatives. 
The second $\Delta$ accounts for the representative-to-interface segment, which may have length up to $\Delta$ from $\operatorname{rep}(q)$, but be infinitesimally short from $q$ itself. 
Again, since $\operatorname{rep}(x)=x$, these two distances coincide.
\footnote{We note that this refinement is only used for tightening the bound, which holds with a slightly looser additive term under the general \spars path-length bound.} 
Thus, the sparse-endpoint stretch property gives
\(
    d_s(x_s^i,x_s^{i+1})
    \leq
    t\cdot d_{\mathcal C}(x_s^i,x_s^{i+1})
\), 
and plugging $d_{\mathcal C}(x_s^i,x_s^{i+1})$ gives 
\(
d_s(x_s^i,x_s^{i+1})\leq t\ell_i+2t\Delta
\).
Note also that for the first and last segments, as \(x_s^0,x_d^0,x_s^{K+1},x_d^{K+1}\) all equal $r$, one of the accounted local paths vanishes.
Summing the segment bounds, the two endpoint segments contribute
\(t\Delta\) each and the \(K-1\) internal segments contribute
\(2t\Delta\) each. Hence,
\[
\begin{aligned}
c(\widetilde{\tau}_s)
&\leq t\sum_{i=0}^{K}\ell_i+2t\Delta K
=t\,c(\tau_d^\star)+2t\Delta K .
\end{aligned}
\]

Finally, we look at \(\tau_s^*\)---an optimal inspection plan on
\(G_s\)---whereas \(\widetilde{\tau}_s\) is one feasible such plan. Therefore,
\[
c(\tau_s^*)
\leq c(\widetilde{\tau}_s)
\leq
t\,c(\tau_d^*)+2t\Delta\cdot K. \qedhere
\]
\end{proof}

Under the standard robustness assumptions for sampling-based inspection
planning~\cite{fu2023asymptotically}, the optimal dense-roadmap tour converges
in probability to an optimal continuous-space inspection tour $\tau^\star$.
Combined with Theorem~1, the optimal sparse-roadmap tour therefore
asymptotically satisfies $c(\tau^\star_s) \le t\,c(\tau^\star) + 2t\Delta K$.

The additive term of this bound grows linearly with $K$, and may become large
for instances with many POIs.
In practice, however, the gap between sparse and dense tours is typically far smaller than the bound suggests. 
First, the additive term thus effectively scales with the number $K' \le K$ of \emph{distinct} representatives requiring a detour.
Second, the underlying \spars path-quality bound is itself a conservative worst
case~\cite{dobson2014sparse}. Our evaluation has found the sparsification to improve inspection tours planned in all scenarios examined.

\section{Evaluations}
\label{sec:eval}

\begin{figure*}
    \centering
    \subfloat[\texttt{City} - dense roadmap]{     \includegraphics[width=.32\textwidth]{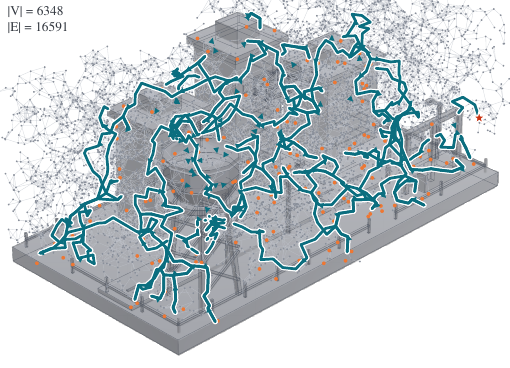}}
    \hfill%
    \subfloat[\texttt{City} - \ispars roadmap]{       \includegraphics[width=.32\textwidth]{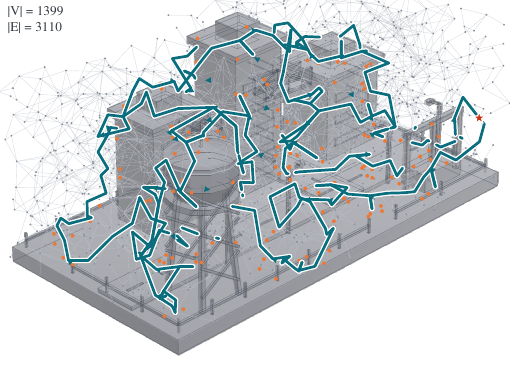}}%
    \hfill%
    \subfloat[\texttt{City} - \irisc roadmap]{%
        \includegraphics[width=.32\textwidth]{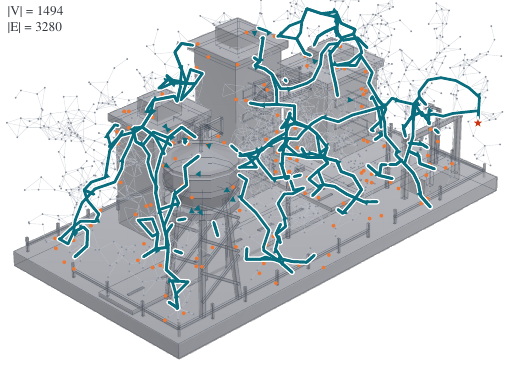}}%
    \hfill%
    \par\vspace{0.4em}

    \includegraphics[width=.75\textwidth]{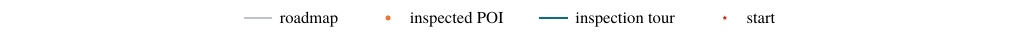}

    \caption{Roadmap and solution inspection plan on the \texttt{City} scene. 
    \kiril{also, report the final solution cost on top of each figure}
    } 
    \vspace{-1em}
    \label{fig:inspection-city}
\end{figure*}

In this section, we confirm the principal end-to-end claim of this work: \ispars enables better inspection plans within a fixed optimization budget. 
We then proceed to analyze our method through an exploration of constructed roadmap characteristics.

\subsection{Experimental Setup}
We evaluate \ispars in three different scenes: \texttt{Water-Tower}, \texttt{Bridge} (Fig.~\ref{fig:roadmap-const-illustration}), and \texttt{City} (Fig.~\ref{fig:inspection-city}).
For each scene, obstacles are represented by triangular meshes, and POIs are sampled uniformly at random from the objects’ surfaces\footnote{No mechanism was used to ensure that every sampled POI is observable from a feasible robot configuration. Thus, \(K\) denotes the number of sampled POIs and may exceed the number of inspectable POIs.}.
The robot is assigned a fixed root configuration where the tour begins and ends.

We model the robot as a spherical aerial vehicle translating in a 3D configuration space.
This provides a simple abstraction of aerial inspection~\cite{cao2025cooperative}, in which a camera sensor is independently oriented using a gimbal and is not considered part of the robot's configuration space.
Accordingly, a POI is visible from a configuration when it lies within the \emph{sensor range} and the connecting line segment is not occluded by the scene mesh.
We consider geometric planning only, without vehicle dynamics. Nevertheless, \ispars applies to general configuration spaces, provided suitable sampling, distance, local-planning, collision-checking, and visibility operations are available.

\niceparagraph{Comparison protocol.}
Three roadmap construction methods are evaluated for planning comparison. (i) \ispars, our proposed method, (ii) its $\delta$-PRM dense roadmap \dense as a no-sparsification baseline, and (iii) \irisc as a state-of-the-art roadmap for sampling-based inspection planning.
\irisc denotes the coverage-informed roadmap construction introduced for \iris~\cite{fu2021computationally}. It constructs an RRG~\cite{karaman2010incremental} while always admitting a candidate that contributes previously unseen POI coverage, and otherwise admitting it with probability \(p_{\mathrm{accept}}\).\footnote{We evaluate \irisc only as roadmap-construction method, independently of the \iris graph-search algorithm.}

The different roadmaps are subsequently passed to the GIP-solving framework of Morgan et al.~\cite{morgan2026scalable}. 
Before invoking the inspection solver, each roadmap is restricted to its root-connected component \(G[r]\). To ensure that all methods solve exactly the same inspection
task, for each seed we define the target set as
\(
 I_t =
 I(G_d[r])
 \cap I(G_s[r])
 \cap I(G_{iris-c}[r]).
\)

This protocol prevents differences in reachable POI coverage from confounding the tour-length comparison. In particular, POIs reached by \dense and \ispars but missed by \irisc are excluded, making the comparison conservative with respect to \irisc. Coverage preservation is evaluated separately in Sec.~\ref{sec:eval-construction}.

\niceparagraph{Implementation.}
Experiments are performed on a laptop  with an Intel Core Ultra~9 185H CPU and 64\,GB RAM.
We implemented all evaluated methods in Python~3.12 and used identical collision-checking, visibility evaluation, and roadmap operations. Our \irisc implementation follows the official reference implementation accompanying~\cite{fu2021computationally}. 
%
All our source code, experiment scenes, and configurations are available in an  \href{https://anonymous.4open.science/r/IS-submission-85AF}{anonymous repository}\footnote{https://anonymous.4open.science/r/IS-submission-85AF}.

\niceparagraph{Planner parameters and calibration.}
Table~\ref{tab:planner-parameters} summarizes the roadmap-construction parameters used in the different experiments. For \spars-based methods, \(\delta\) is the connection radius of the dense roadmap, while \(\Delta\) is the maximum local-planner range used to construct the sparse roadmap. We select these values according to the geometric scale of each scene, selecting \(\Delta=1.5\times\delta\). Increasing \(\Delta\) generally promotes stronger sparsification by enlarging the locally represented region, but also increases the additive term in the path- and
inspection-tour bounds. We fix the \spars stretch factor to \(t=1.05\) in all experiments. This limits the multiplicative stretch term to \(5\%\), independently of the additive dependence on \(\Delta\).
For a geometrically comparable evaluation, \irisc uses step size \(\eta=\delta\) and RRG connection radius
\(r_{\mathrm{RRG}}=\Delta\). 
We calibrate \(p_{\mathrm{accept}}=0.2\) so \irisc approximates the edge
sparsification achieved by \ispars. 
This value is within the typical range for \irisc~\cite{fu2021computationally}. All methods use the same local planner and collision checker. 

\begin{table}[t]
    \centering
    \caption{Geometric and sensing parameters. The
    \iris-based methods use \(\eta=\delta\) and
    \(r_{\mathrm{RRG}}=\Delta\). }
    \label{tab:planner-parameters}
    \footnotesize
    \setlength{\tabcolsep}{3.8pt}
    \begin{tabular}{lcccc}
        \toprule
        Scene &
        Robot radius &
        Sensor range &
        $\delta\;(=\eta)$ &
        $\Delta\;(=r_{\mathrm{RRG}})$ \\
        \midrule
        \texttt{Water-Tower} & 0.5 & 6 & 10 & 15 \\
        \texttt{Bridge}      & 0.8 & 10 & 3 & 4.5   \\
        \texttt{City}      & 0.8 & 10 & 8 & 12 \\
        \bottomrule
    \end{tabular}
    \vspace{-2em}
\end{table}

\subsection{End-to-End Inspection-Planning Performance}

\begin{figure*}
    \vspace{0.5em}
    \centering
    \includegraphics[width=\linewidth]{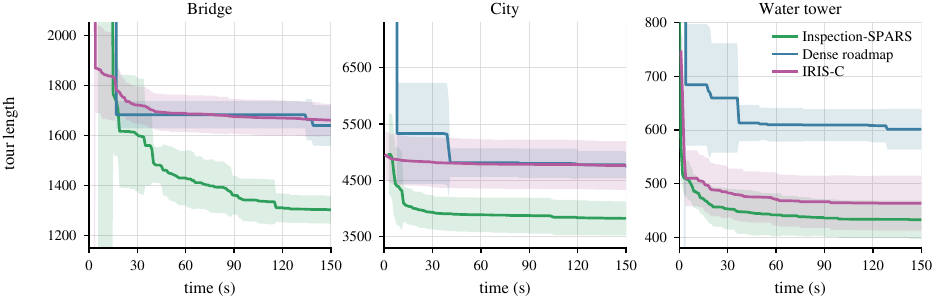}
    \caption{Anytime solver performance with respect to inspection plan length. Solid curves show the mean over the six seeds, and shaded regions show one standard deviation. Lower is better.
    }
    \label{fig:solver-performance}    
    \vspace{-1em}
\end{figure*}

In the following experiment, we examine whether task-aware roadmap sparsification enables the inspection solver to find shorter tours within a fixed computation budget.
We consider three scenarios of different scales: \texttt{Water-Tower} with \(N=4000\) collision-free samples and \(K=100\) generated POIs, \texttt{Bridge} with \(N=5000\) and \(K=500\), and \texttt{City} with \(N=7000\) and \(K=500\). 
For each scene we construct \dense,
\ispars, and \irisc roadmaps, and for every roadmap, the GIP solver is allotted a fixed optimization budget of \(150\,\mathrm{s}\), and results are reported over six independent random seeds.

\begin{table}[t]
\centering
\caption{Roadmap statistics. Values are mean $\pm$ standard deviation.}
\label{tab:roadmap-statistics}

\scriptsize
\setlength{\tabcolsep}{2.5pt}
\renewcommand{\arraystretch}{1.05}

\begin{adjustbox}{max width=\columnwidth}
\begin{tabular}{@{}llrrr@{}}
\toprule
Scenario & Roadmap & $|V|$ & $|E|$ & Covered POIs \\
\midrule
\multirow{3}{*}{\texttt{Bridge}}
 & \dense   & $4955 \pm 11$ & $27508 \pm 457$ & $404.2 \pm 6.1$ \\
 & \ispars & $875 \pm 18$  & $3589 \pm 42$   & $404.8 \pm 6.1$ \\
 & \irisc  & $1199 \pm 28$ & $3935 \pm 103$  & $381.7 \pm 23.0$ \\
\midrule
\multirow{3}{*}{\texttt{City}}
 & \dense   & $6391 \pm 40$ & $16891 \pm 281$ & $225.5 \pm 11.6$ \\
 & \ispars & $1425 \pm 29$ & $3165 \pm 61$   & $249.8 \pm 8.7$ \\
 & \irisc  & $1462 \pm 48$ & $3101 \pm 224$  & $192.7 \pm 5.0$ \\
 \midrule
\multirow{3}{*}{\texttt{Water-Tower}}
 & \dense   & $3933 \pm 17$ & $14474 \pm 198$ & $73.8 \pm 4.1$ \\
 & \ispars & $476 \pm 10$  & $2086 \pm 14$   & $76.2 \pm 3.6$ \\
 & \irisc  & $843 \pm 24$  & $3168 \pm 165$  & $70.8 \pm 3.2$ \\
\bottomrule
\end{tabular}
\end{adjustbox}
\vspace{-2em}
\end{table}

Table~\ref{tab:roadmap-statistics} reports the resulting roadmap sizes and POI coverage. 
This table shows \ispars provides substantial sparsification over \dense, building $4-8\times$ fewer vertices and edges, and does so without any coverage loss. Coverage is even slightly improved by \ispars occasionally, due to the \emph{bridge} and \emph{interface} mechanisms allowing further connectivity. 
\irisc on the other hand, loses POIs coverage as expected. We further analyze this finding in Sec.~\ref{sec:eval-construction}, focusing on graph construction. $I_t$ is taken as the set of POIs covered by all roadmaps.

Before solving the GIP on the roadmaps, the employed GIP solver~\cite{morgan2026scalable} was configured to provide the best possible results for each roadmap, by selecting between the available underlying MILP formulations and their solution approaches: the \textsc{Group-Cutset} formulation was used for the larger \dense roadmap, and the \textsc{SCF} formulation for the smaller \ispars and \irisc roadmaps.\footnote{In all cases, both formulations were evaluated, and the best results were selected. In particular, using the \textsc{SCF} solver for \dense typically failed to produce a feasible solution within the available runtime. This solver formulation distinction can be seen as a direct advantage of roadmap sparsification---enabling the use of a complete \textsc{SCF} formulation rather than a lazy branch-and-cut-based \textsc{Group-Cutset} formulation.}
In all cases, the solver was allotted a fixed $150\,\mathrm{s}$ computational budget.

Fig.~\ref{fig:solver-performance} shows the best feasible tour length found by the solver, as a function of wall-clock time.
On the \texttt{Bridge} and \texttt{City} scenarios, tours found on \ispars are $20\%-25\%$ shorter than those found using other methods.
For the geometrically simpler \texttt{Water-Tower} scenario, while improvement over \dense is substantial, the gain over \irisc is smaller.

%

In Fig.~\ref{fig:inspection-city}, an instance of the \texttt{City} scene illustrates the structural differences among the roadmaps.
The \dense roadmap is heavily cluttered with structure that is largely redundant for inspection-plan quality.
The \irisc roadmap is sparser because it retains only a randomly thinned subset of the samples. Its limited spatial exploration and
tree-like structure lead to an inefficient tour with substantial backtracking.
By contrast, \ispars preserves the exploration and connectivity of \dense while removing redundancy and retaining routes and shortcuts
missed by \irisc, enabling the GIP solver to find high-quality tours rapidly.
A similar pattern appears in the \texttt{Bridge} scene, where \ispars uses the sampling budget to achieve well-distributed geometric
coverage, whereas the random thinning of \irisc results in weaker spatial exploration.
In the geometrically simpler \texttt{Water-Tower} scene, \irisc retains competitive tours even with naive sparsification, yielding tours close to those planned on \ispars and significantly better than tours planned on \dense.

\subsection{Roadmap Construction and Scalability}
\label{sec:eval-construction}
We next examine how roadmap structure scales with the sampling budget \(N\). Fig.~\ref{fig:construction} reports the numbers of vertices and edges, POI coverage, and cumulative collision checks in the \texttt{Water-Tower} scene over ten independent seeds.

\begin{figure}
    \centering
    \includegraphics[width=1.0\linewidth]{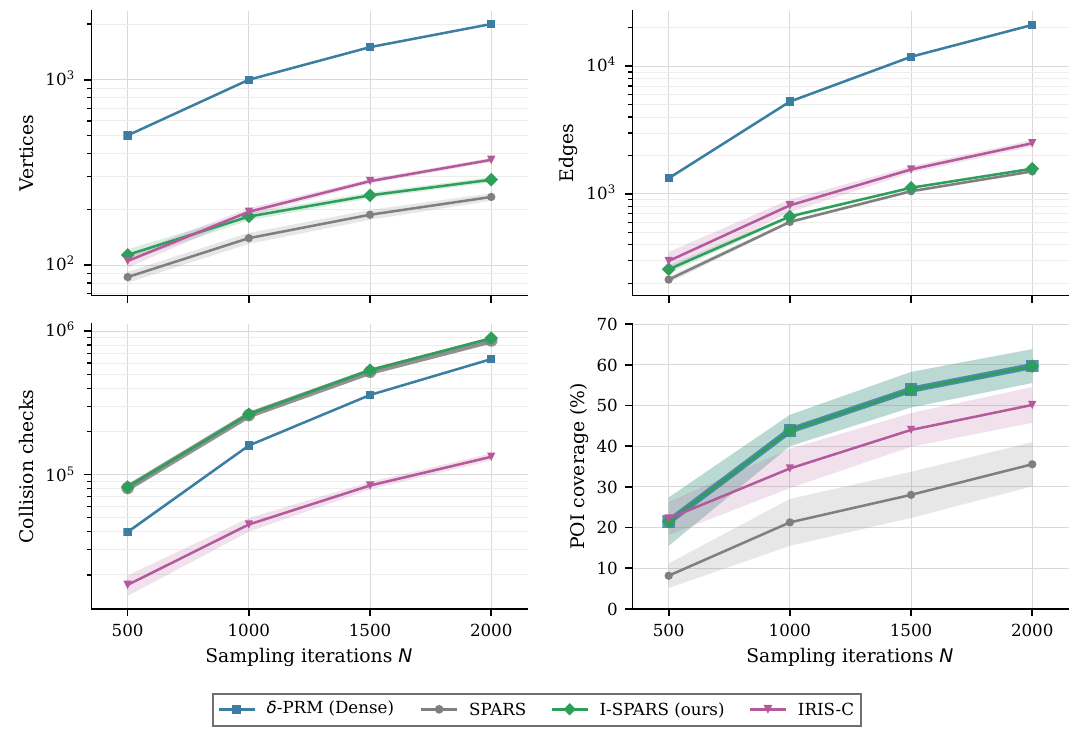}
    \caption{Roadmap construction vs. number of sampling iterations \(N\) for \texttt{Water-Tower}. Curves show means over ten random seeds, shaded regions show standard deviation. Count-based axes use logarithmic scales.}
    \label{fig:construction}
    \vspace{-2em}
\end{figure}

Across all sampling budgets, \ispars produces a substantially smaller roadmap than \dense, with the reduction being more pronounced as \(N\) increases. At \(N=2000\), it contains approximately \(6\times\) fewer vertices and \(8\times\) fewer edges than \dense. This is achieved while POI coverage remains identical to this of the \dense roadmap, with distinct advantage over \irisc and \spars.

Although Claim~\ref{clm:invariant-preservation} guarantees identical POI coverage for the complete \dense and \ispars roadmaps at every iteration, their root-connected components are guaranteed to agree only asymptotically. Fig.~\ref{fig:construction} shows that, even for small \(N\), the finite-sample difference is negligible.

The vertex-growth trends illustrate the effect of different admission policies. While every collision-free sample is inserted into \dense, the admission rate of \ispars decreases with $N$ as the roadmap progressively capture the geometric and inspection structures.
This saturation is not exhibited by \irisc, whose fixed acceptance probability continues to admit a fraction of samples even after newly revealed coverage becomes rare. 
Comparing \ispars with vanilla \spars isolates the cost of the proposed inspection criterion: local coverage preservation produces only a moderate increase in the number of retained vertices, and has a minor effect on the resulting edge count.


The number of collision checks provides a hardware- and implementation- independent proxy for roadmap-construction effort. \spars and \ispars require more checks than \dense because they maintain the paired dense roadmap while additionally evaluating local connections.
The small difference exhibited shows \ispars adds only a little overhead beyond vanilla \spars.
\irisc performs the fewest checks, simply since it utilizes only a fraction of its sampling budget. We note inspection-planning runtime is typically dominated by the downstream solver~\cite{fu2023asymptotically}, while collision checking can be accelerated by modern implementations~\cite{thomason2024motions}.

\section{Conclusion and Future Work}
\label{sec:conclusion}
We introduced \ispars, a task-aware extension of \spars that preserves
dense-roadmap observations through nearby sparse representatives while
retaining the connectivity and asymptotic path-quality guarantees of
\spars.
This structure yields an explicit approximation bound relating optimal
inspection tours on paired dense and sparse roadmaps.
Experiments show that \ispars preserves coverage and high-quality tours
while substantially reducing roadmap size, enabling the GIP solver to
find shorter tours within fixed computation budgets than on the
corresponding dense roadmaps.
To our knowledge, \ispars is the first task-oriented extension of
\spars and the first inspection-roadmap sparsifier with a tour-quality
guarantee.


Several limitations of our work motivate future research.
The presented guarantees are asymptotic and conservative, with stronger finite-time guarantees limited primarily by the connectivity properties inherited from \spars.
Recent progress in deterministic sampling for motion planning may provide a foundation for a sharper finite-time analysis, e.g.,~\cite{Panasoff.Solovey.25}.
%
More broadly, the ideas presented here may extend beyond inspection planning.
Many task-oriented motion-planning problems follow a similar roadmap-to-solver pipeline.
By constructing sparse roadmaps that preserve both motion geometry and task-relevant structure, such pipelines may achieve similar improvements in downstream solver efficiency.

\bibliography{references}

@inproceedings{morgan2026scalable,
  title={Scalable Inspection Planning via Flow-based Mixed Integer Linear Programming},
  author={Morgan, Adir and Solovey, Kiril and Salzman, Oren},
  booktitle={WAFR},
  year={2026}
}

@inproceedings{mizutani2024leveraging,
  title={Leveraging fixed-parameter tractability for robot inspection planning},
  author={Mizutani, Yosuke and Salomao, Daniel Coimbra and Crane, Alex and Bentert, Matthias and Drange, P{\aa}l Gr{\o}n{\aa}s and Reidl, Felix and Kuntz, Alan and Sullivan, Blair D},
  booktitle={WAFR},
  year={2024},
}

@inproceedings{Panasoff.Solovey.25,
  author       = {Itai Panasoff and
                  Kiril Solovey},
  title        = {Effective Sampling for Robot Motion Planning Through the Lens of
 Lattices},
  booktitle      = {RSS},
  year         = {2025},
}

@article{fu2023asymptotically,
  title={Asymptotically optimal inspection planning via efficient near-optimal search on sampled roadmaps},
  author={Fu, Mengyu and Kuntz, Alan and Salzman, Oren and Alterovitz, Ron},
  journal={IJRR},
  volume={42},
  number={4-5},
  pages={150--175},
  year={2023},
  publisher={SAGE Publications Sage UK: London, England}
}

@inproceedings{fu2021computationally,
  title={Computationally-efficient roadmap-based inspection planning via incremental lazy search},
  author={Fu, Mengyu and Salzman, Oren and Alterovitz, Ron},
  booktitle={ICRA},
  pages={7449--7456},
  year={2021},
  organization={IEEE}
}

@article{dobson2014sparse,
  title={Sparse roadmap spanners for asymptotically near-optimal motion planning},
  author={Dobson, Andrew and Bekris, Kostas E},
  journal={IJRR},
  volume={33},
  number={1},
  year={2014},
  publisher={SAGE Publications Sage UK: London, England}
}

@inproceedings{nissoux1999visibility,
  title={Visibility based probabilistic roadmaps},
  author={Nissoux, Carole and Sim{\'e}on, Thierry and Laumond, J-P},
  booktitle={IROS},
  volume={3},
  pages={1316--1321},
  year={1999},
  organization={IEEE}
}

@article{karaman2010incremental,
  title={Sampling-based algorithms for optimal motion planning},
  author={Karaman, Sertac and Frazzoli, Emilio},
  journal={IJRR},
  volume={30},
  number={7},
  pages={846--894},
  year={2011},
  publisher={Sage Publications Sage UK: London, England}
}

@article{cao2025cooperative,
  title={Cooperative aerial robot inspection challenge: A benchmark for heterogeneous multi-{UAV} planning and lessons learned},
  author={Cao, Muqing and Nguyen, Thien-Minh and Yuan, Shenghai and Anastasiou, Andreas and Zacharia, Angelos and Papaioannou, Savvas and Kolios, Panayiotis and Panayiotou, Christos G and Polycarpou, Marios M and Xu, Xinhang and others},
  journal={arXiv preprint 2501.06566},
  year={2025}
}

@article{marble2013asymptotically,
  title={Asymptotically near-optimal planning with probabilistic roadmap spanners},
  author={Marble, James D and Bekris, Kostas E},
  journal={IEEE Transactions on Robotics},
  volume={29},
  number={2},
  pages={432--444},
  year={2013},
  publisher={IEEE}
}

@article{salzman2014sparsification,
  title={Sparsification of motion-planning roadmaps by edge contraction},
  author={Salzman, Oren and Shaharabani, Doron and Agarwal, Pankaj K and Halperin, Dan},
  journal={IJRR},
  volume={33},
  number={14},
  pages={1711--1725},
  year={2014},
  publisher={SAGE Publications Sage UK: London, England}
}

@inproceedings{wang2013fast,
  title={A fast streaming spanner algorithm for incrementally constructing sparse roadmaps},
  author={Wang, Weifu and Balkcom, Devin and Chakrabarti, Amit},
  booktitle={IROS},
  pages={1257--1263},
  year={2013},
  organization={IEEE}
}

@inproceedings{englot2012sampling,
  title={Sampling-based coverage path planning for inspection of complex structures},
  author={Englot, Brendan and Hover, Franz},
  booktitle={ICAPS},
  volume={22},
  pages={29--37},
  year={2012}
}

@article{urtasun2024sparse,
  title={Sparse sampling-based view planning for complex geometries},
  author={Urtasun, Benat and Andonegui, Imanol and Gorostegui-Colinas, Eider},
  journal={IEEE Sensors Journal},
  volume={24},
  number={9},
  pages={14992--15003},
  year={2024},
  publisher={IEEE}
}

@inproceedings{danner2000randomized,
  title={Randomized planning for short inspection paths},
  author={Danner, Tim and Kavraki, Lydia E},
  booktitle={ICRA},
  volume={2},
  pages={971--976},
  year={2000},
  organization={IEEE}
}

@article{kavraki1996probabilistic,
  title={Probabilistic roadmaps for path planning in high-dimensional configuration spaces},
  author={Kavraki, Lydia E and Svestka, Petr and Latombe, J-C and Overmars, Mark H},
  journal={Tran.\ on Robotics and Automation},
  volume={12},
  number={4},
  year={1996},
  publisher={IEEE}
}

@article{peleg1989graph,
  title={Graph spanners},
  author={Peleg, David and Sch{\"a}ffer, Alejandro A},
  journal={Journal of graph theory},
  volume={13},
  number={1},
  pages={99--116},
  year={1989},
  publisher={Wiley Online Library}
}

@inproceedings{thomason2024motions,
  title={Motions in microseconds via vectorized sampling-based planning},
  author={Thomason, Wil and Kingston, Zachary and Kavraki, Lydia E},
  booktitle={ICRA},
  year={2024},
  organization={IEEE}
}

\end{document}